\newtheorem{theorem}{Theorem}
\newtheorem{lemma}[theorem]{Lemma}
\theoremstyle{definition}
\newcommand{\LL}[0]{f}
\newcommand{\Ts}[0]{\mathcal{T}}
\newcommand{\vv}[0]{\mathbf{v}}
\newcommand{\RR}[0]{\mathbb{R}}
\newcommand{\rnorm}[0]{\right| }
\newcommand{\lnorm}[0]{\left| }
\newcommand{\Dtheta}[0]{\Delta \theta}
\newcommand{\hess}[0]{\mathbf{H}}
\newcommand{\es}[0]{\mathbf{e}}
\newcommand{\example}[0]{\mathbf{x}}
\newcommand{\Fbf}[0]{\mathbf{F}}
\newcommand{\Abf}[0]{\mathbf{A}}
\newcommand{\el}[1]{_{[#1]}}
\def\slantfrac#1#2{\kern.1em^{#1}\kern-.3em/\kern-.1em_{#2}}
\DeclareMathOperator*{\argmin}{arg\,min}
\begin{document}

\title{On the saddle point problem for non-convex optimization}

\author{Razvan Pascanu\\
Universit\'e de Montr\'eal\\
\texttt{r.pascanu@gmail.com} \\
\And
Yann N. Dauphin\\
Universit\'e de Montr\'eal\\
\texttt{dauphiya@iro.umontreal.ca}
\And
Surya Ganguli\\
Stanford University\\
\texttt{sganguli@standford.edu}\\
\And
Yoshua Bengio\\
Universit\'e de Montr\'eal\\
CIFAR Fellow \\
\texttt{yoshua.bengio@umontreal.ca}\\
}

\maketitle
\begin{abstract}
    A central challenge to many fields of science and engineering involves
    minimizing non-convex error functions over continuous, high dimensional
    spaces.  Gradient descent or quasi-Newton methods are almost ubiquitously
    used to perform such minimizations, and it is often thought that a main
    source of difficulty for the ability of these local methods to find the
    global minimum is the proliferation of local minima with much higher error
    than the global minimum.  Here we argue, based on results from statistical
    physics, random matrix theory, and neural network theory, that a deeper and
    more profound difficulty originates from the proliferation of saddle
    points, not local minima, especially in high dimensional problems of
    practical interest. Such saddle points are surrounded by high error
    plateaus that can dramatically slow down learning, and give the illusory
    impression of the existence of a local minimum.  Motivated by these
    arguments, we propose a new algorithm, the saddle-free Newton method, that
    can rapidly escape high dimensional saddle points, unlike gradient descent
    and quasi-Newton methods.  We apply this algorithm to deep neural network
    training, and provide preliminary numerical evidence for its superior
    performance. 
\end{abstract}

\section{Introduction}

It is often the case that our geometric intuition, derived from our experience
within a low dimensional physical world, is woefully inadequate for thinking
about the geometry of typical error surfaces in high-dimensional spaces.
Consider, for example, minimizing a typical, randomly chosen error function of
a single scalar variable.  More precisely, the error function could be a single
draw of a Gaussian process \citep{Rasmussen05}.  Such a random error function
would, with high probability over the choice of function, have many local
minima (maxima), in which the gradient vanishes and the second derivative is
negative (positive).  However, it is highly unlikely to have a saddle point
(see Figure~\ref{fig:different_saddle} (a)), in which the gradient ${\it and}$
the second derivative vanish.  Indeed, such saddle points, being a degenerate
condition,  would occur with probability zero. Similarly, typical, random error
functions on higher dimensional spaces of $N$ variables are likely to have many
local minima for very small $N$.  However,  as we review below, as the
dimensionality $N$ increases, local minima with high error relative to the
global minimum occur with a probability that is exponentially small in $N$. 

In general, consider an error function $\LL(\theta)$  where $\theta$ is an $N$
dimensional continuous variable. A critical point is by definition a point
$\theta$ where the gradient of $\LL(\theta)$ vanishes.  All critical points of
$f(\theta)$ can be further characterized by the curvature of the function in
its vicinity, as described by the eigenvalues of the Hessian.  Note that the
Hessian is symmetric and hence the eigenvalues are real numbers. The following
are the four possible scenarios: 

\begin{itemize}
\item If all eigenvalues are non-zero and  positive, then the critical point is
a local minimum.  
\item If all eigenvalues are non-zero and negative, then the critical point is
a local maximum.
\item If the eigenvalues are non-zero and we have both positive and negative
eigenvalues, then the critical point is a saddle point with a \emph{min-max}
structure (see Figure~\ref{fig:different_saddle} (b)). That is, if we restrict
the function $\LL$ to the subspace spanned by the eigenvectors corresponding to
positive (negative) eigenvalues, then the saddle point is a maximum (minimum)
of this restriction.
\item If the Hessian matrix is singular, then the \emph{degenerate} critical
point can be a saddle point, as it is, for example, for $\theta^3, \theta
\in\RR$ or for the monkey saddle (Figure~\ref{fig:different_saddle} (a) and
(c)).  If it is a saddle, then, if we restrict $\theta$ to only change along
the direction of singularity, the restricted function does not exhibit a
minimum nor a maximum; it exhibits, to second order, a plateau. When moving from
one side to other of the plateau, the eigenvalue corresponding to this picked
direction generically changes sign, being exactly zero at the critical point.
Note that an eigenvalue of zero can also indicate the presence of a gutter
structure, a degenerate minimum, maximum or saddle, where a set of connected
points are all minimum, maximum or saddle structures of the same shape
and error.  In Figure~\ref{fig:different_saddle} (d) it is shaped as a circle.
The error function looks like the bottom of a wine bottle, where all points
along this circle are minimum of equal value.
\end{itemize}

\begin{figure}[t]
    \centering
    \begin{minipage}{0.24\textwidth}
        \centering
        \includegraphics[width=0.85\columnwidth, clip=true, trim=2cm .5cm 2cm .5cm]{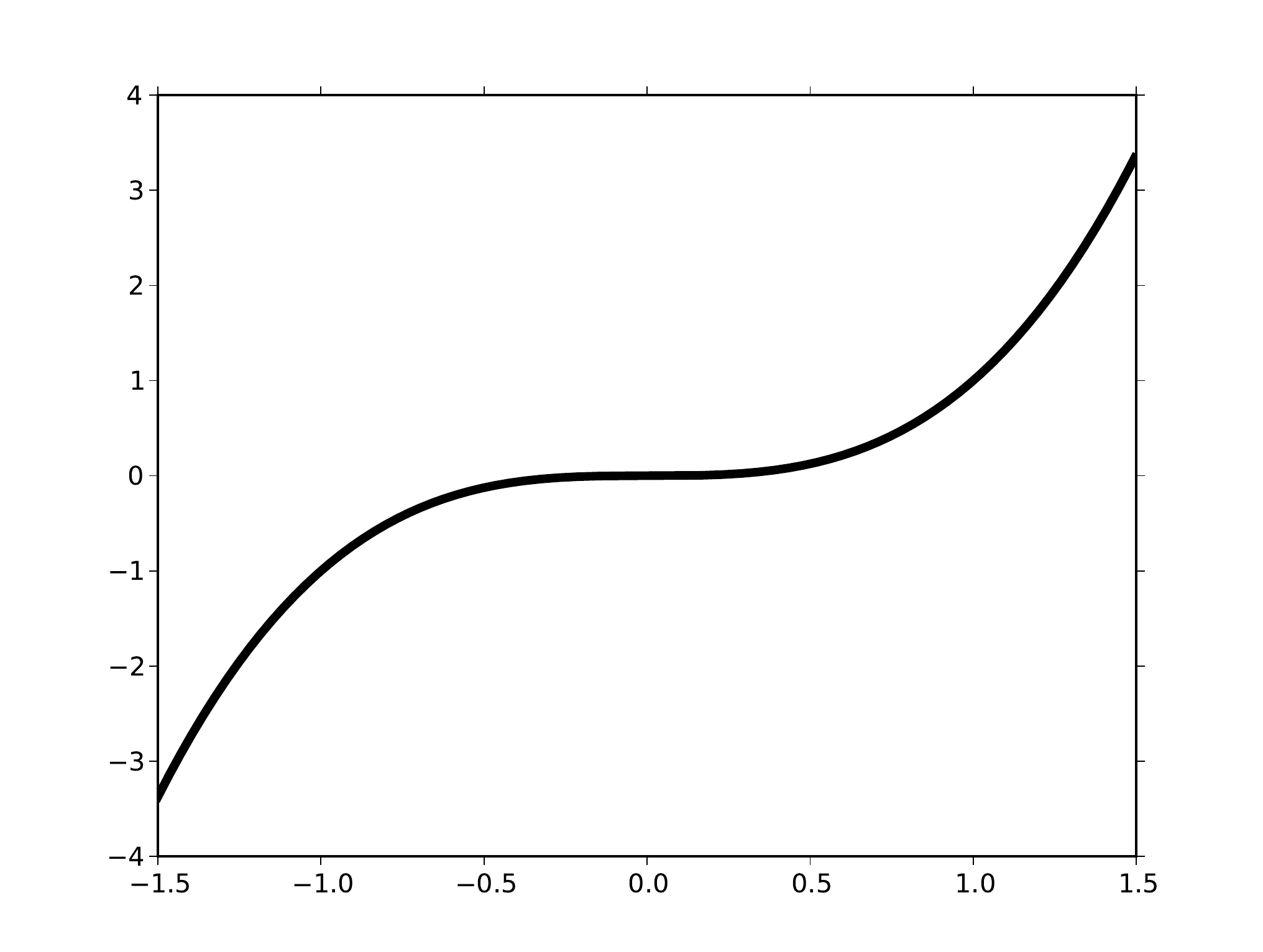}
    \end{minipage}
    \hfill
    \begin{minipage}{0.24\textwidth}
        \centering
        \includegraphics[width=1.\columnwidth, clip=true, trim=2cm 1cm 2cm 1cm]{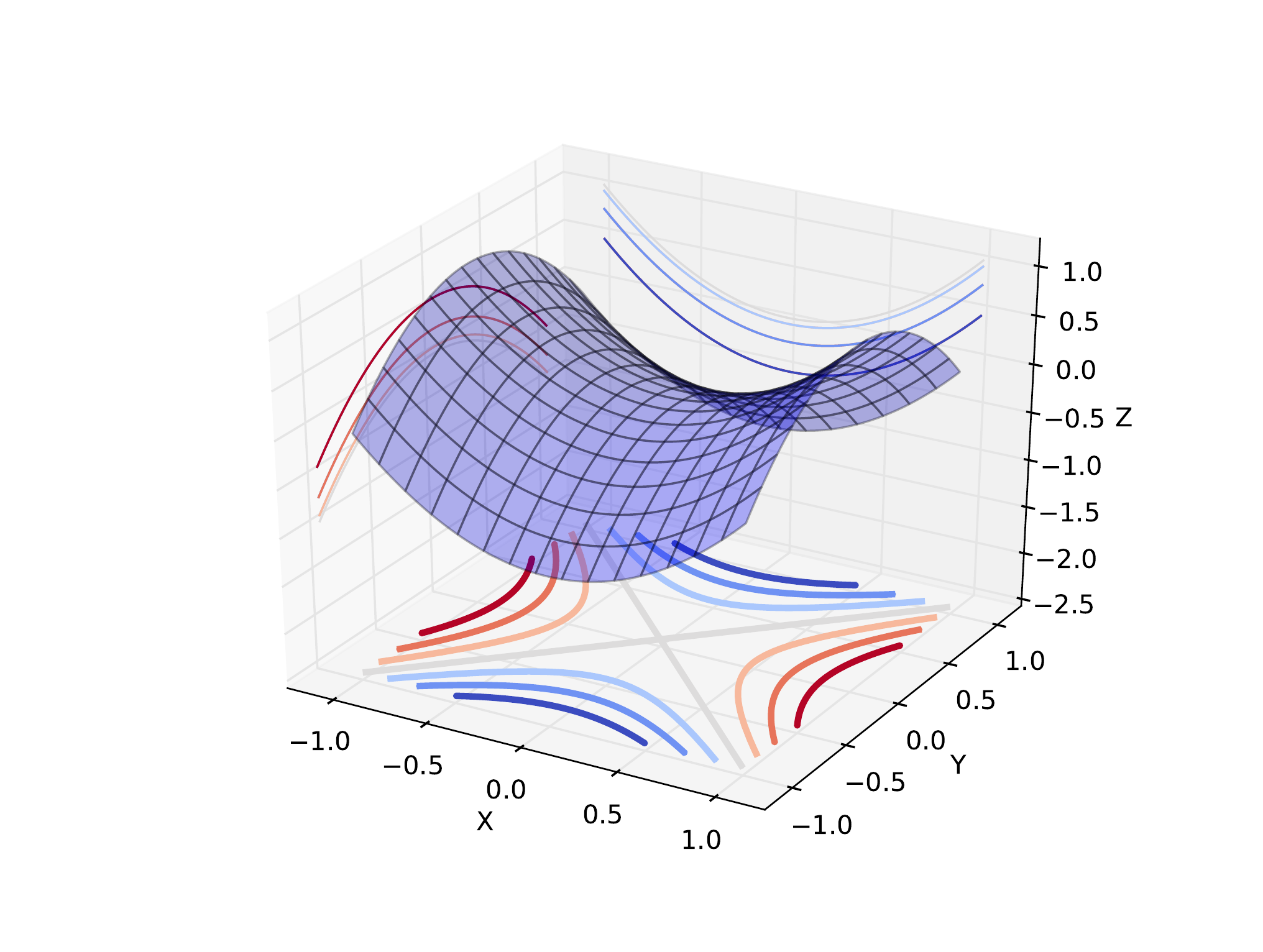}
    \end{minipage}
    \begin{minipage}{0.24\textwidth}
        \centering
        \includegraphics[width=1.\columnwidth, clip=true, trim=2cm 1cm 2cm 1cm]{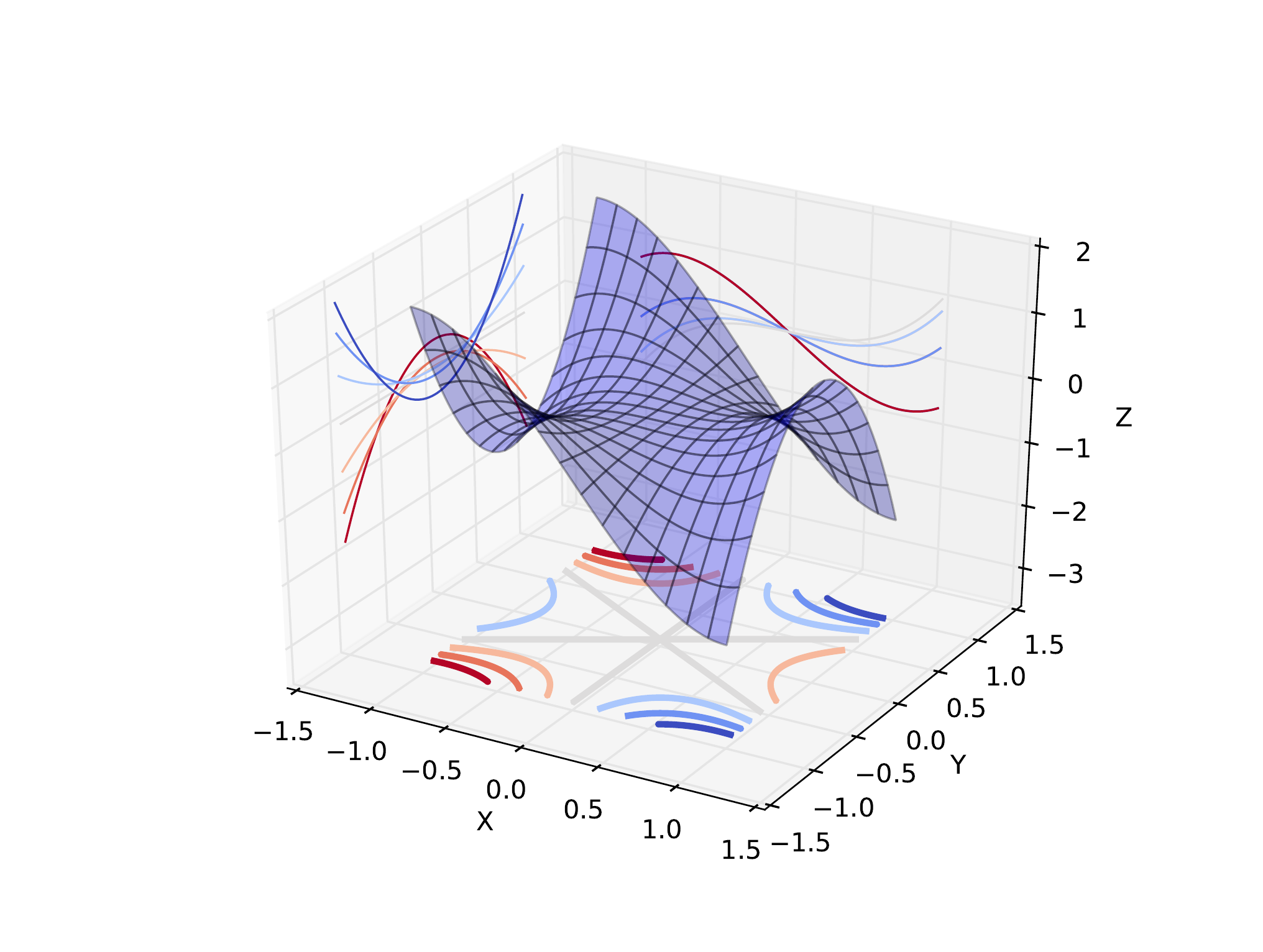}
    \end{minipage}
    \hfill
    \begin{minipage}{0.24\textwidth}
        \centering
        \includegraphics[width=1.\columnwidth, clip=true, trim=2cm 2cm 2cm 2cm]{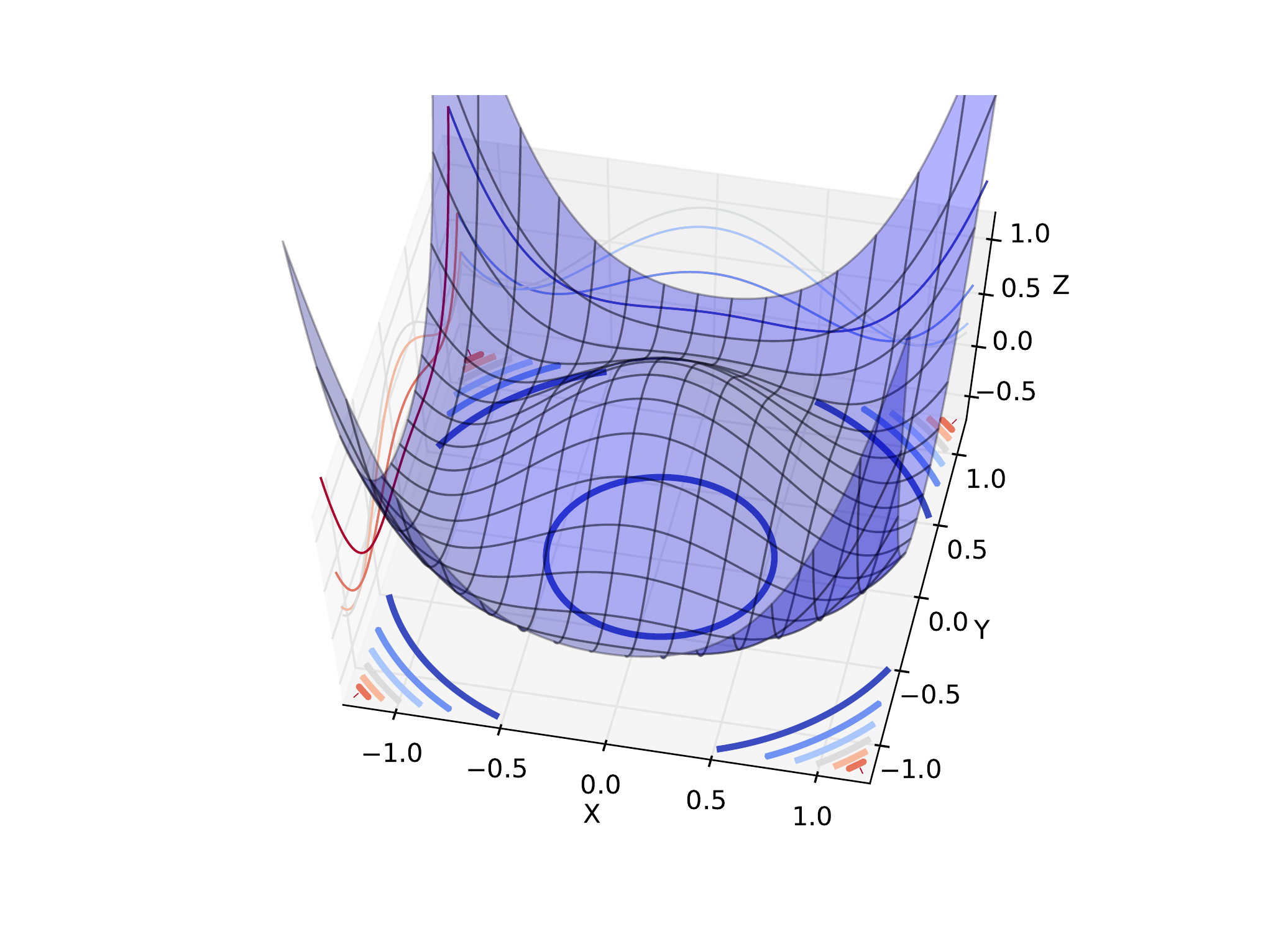}
    \end{minipage}
    \\
    \vspace{3mm}
    \begin{minipage}{0.24\textwidth}
        \centering
        (a) 
    \end{minipage}
    \hfill
    \begin{minipage}{0.24\textwidth}
        \centering
        (b) 
    \end{minipage}
    %
    \begin{minipage}{0.24\textwidth}
        \centering
        (c) 
    \end{minipage}
    \hfill
    \begin{minipage}{0.24\textwidth}
        \centering
        (d) 
    \end{minipage}
\caption[Different saddle point structures] {Illustrations of three different
types of saddle points (a-c) plus a gutter structure (d).  Note that for the
gutter structure, any point from the circle $x^2 + y^2 = 1$ is a minimum.  The
shape of the function is that of the bottom of a bottle of wine. This means
that the minimum is a ``ring'' instead of a single point. The Hessian is
singular at any of these points. (c) shows a Monkey saddle where you have both
a min-max structure as in (b) but also a 0 eigenvalue, which results, along
some direction, in a shape similar to (a).
}
\label{fig:different_saddle}
\end{figure}

A plateau is an almost flat region in some direction.  This structure is given
by having the eigenvalues (which describe the curvature) corresponding to the
directions of the plateau be \emph{close to 0}, but \emph{not exactly 0}. Or,
additionally, by having a large discrepancy between the norm of the
eigenvalues. This large difference would make the direction of ``relative''
small eigenvalues look like flat compared to the direction of large
eigenvalues.

Saddle points are unstable under gradient descent dynamics on the error
surface, as the dynamics is repelled away from the saddle by directions of
negative curvature. However, this repulsion can occur slowly due to plateaus of
small negative eigenvalues.  A similar slow-down can occur for local minima
with small positive eigenvalues. Second order methods were designed to rapidly
descend local minima by rescaling gradient steps by the inverse eigenvalues.
However, the Newton method does not treat saddle points appropriately; as
argued below, saddle-points instead become attractive under the Newton
dynamics.

Thus given the proliferation of saddle points, not local minima, in high
dimensional problems, the entire theoretical justification for quasi-Newton
methods, i.e. the ability to rapidly descend to the bottom of a convex local
minimum, becomes less relevant in high dimensional nonconvex optimization.  In
this work, we propose an algorithm whose theoretical justification is motivated
by ensuring rapid escape from saddle points.  This algorithm leverages second
order-curvature information in a fundamentally different way than quasi-Newton
methods, and also, in preliminary results, out-performs them in some high
dimensional problems.

\subsection{The prevalence of saddle points}

Here we review arguments from disparate literatures suggesting that saddle
points, not local minima, provide a fundamental impediment to rapid high
dimensional non-convex optimization.  One line of evidence comes from
statistical physics, where the nature of critical points of random Gaussian
error functions on high dimensional continuous domains is studied
\citet{Bray07, Fyodorov07} using replica theory,  a theoretical technique for
analyzing high dimensional systems with \emph{quenched disorder} like spin
glasses (see \citet{Parisi07} for a recent review).  In particular,
\citet{Bray07} counted the typical number of critical points of a random
function in a finite cubic volume in $N$ dimensional space within a range of
error $\epsilon$ and index $\alpha$.  By definition the index $\alpha$ is the
fraction of negative eigenvalues of the Hessian at the critical point.  Of
course every such function has a unique global minimum at $\epsilon =
\epsilon_{\text{min}}$ and $\alpha = 0$ and a global maximum at $\epsilon =
\epsilon_{\text{max}}$ and $\alpha = 1$, where $\epsilon_{\text{min}}$  and
$\epsilon_{\text{max}}$ do not depend strongly on the detailed realization of
the random function due to concentration of measure.  In \citet{Bray07}, the
authors found that any such function, over a large enough volume, has
exponentially many critical points, but in the $\epsilon-\alpha$ plane all the
critical points are overwhelmingly likely to be located on a monotonically
increasing curve $\epsilon^*(\alpha)$ that rises from $\epsilon_{\text{min}}$
to $\epsilon_{\text{max}}$ as $\alpha$ ranges from $0$ to $1$.   Indeed the
probability that a critical point lies an $O(1)$ distance off this curve, both
over the choice of a critical point for a given function and over the choice of
random function from the Gaussian ensemble, is exponentially small in the
dimensionality $N$, for large $N$. 

Intuitively, these theoretical results indicate that critical points at any
intermediate error $\epsilon$ above the global minimum $\epsilon_{\text{min}}$
are exponentially likely to be saddle points, with the fraction of negative
eigenvalues $\alpha$ of the Hessian monotonically increasing with $\epsilon$.
Furthermore, any critical point with a very small fraction of negative
eigenvalues is exponentially likely to occur at low error $\epsilon$ close to
$\epsilon_{\text{min}}$.  In particular, any local minimum with $\alpha=0$ will
have an error exceedingly close to that of the global minimum.  Thus the
optimization landscape of a random Gaussian error function has no local minima
with high error, but is instead riddled with exponentially many saddle points
at errors far above that of the global minimum error.

These results can be further understood via random matrix theory.  Indeed, for
a random error function in $N$ dimensions, the Hessian at a critical point at
error $\epsilon$ can be thought of as an $N$ by $N$ random symmetric matrix
whose eigenvalue distribution depends on $\epsilon$.  \citet{Bray07} found that
the entire eigenvalue distribution of the Hessian took the form of Wigner's
famous semi-circular law \citep{Wigner58}, but shifted by an amount determined
by $\epsilon$.  Indeed, a completely unconstrained random symmetric matrix has
a symmetric eigenvalue density on the real axis shaped like a semicircle with
both mode and mean at $0$.  Thus any eigenvalue is positive or negative with
probability $1/2$.  The eigenvalue distribution of the Hessian of the critical
point at error $\epsilon$ is a shifted semicircle, where the shift ensures that
the fraction of negative eigenvalues $\alpha$ is given exactly by the solution
to $\epsilon = \epsilon^*(\alpha)$.  When the error $\epsilon =
\epsilon_\text{min}$, the semicircular distribution of the Hessian is shifted
so far to the right that all eigenvalues are positive, corresponding to the
global minimum.  As the error $\epsilon$ of the critical point increases, the
semi-circular eigenvalue distribution shifts to the left, and the fraction of
negative eigenvalues $\alpha$ increases.  At intermediate error $\epsilon$,
half way between $\epsilon_\text{min}$ and  $\epsilon_\text{max}$, the
semicircular distribution of eigenvalues has its mode at $0$.  This implies
that the highest density of eigenvalues occurs near $0$, and so a typical
critical point at intermediate error not only has many negative curvature
directions, but also many approximate plateau directions, in which a finite
fraction of eigenvalues of the Hessian lie near $0$.   The existence of these
approximate plateau directions, in addition to the negative directions, would
of course have significant implications for high dimensional non-convex
optimization, in particular, dramatically slowing down gradient descent
dynamics.  Moreover, \citet{Fyodorov07} give a review of work in which
qualitatively similar results are derived for random error functions
superimposed on a quadratic error surface.   

Before continuing, we note that  the random matrix perspective concisely and
intuitively crystalizes the striking difference between the geometry of low and
high dimensional error surfaces.  For $N=1$, the Hessian of a random function
is a single random number, and so with overwhelming probability it will be
positive or negative; the event that it is $0$ is a set of measure zero.  This
reflects the intuition that saddle points in $1$ dimension are extremely
unlikely, while maxima and minima always occur.  Alternatively, an
unconstrained random Gaussian matrix in $N$ dimensions has a probability
$O(e^{-N})$ that all of its eigenvalues are positive.  This fact reflects that
local minima with error far higher than the global minima are exponentially
unlikely.  The Hessian at a critical point with error very close to the global
minimum is not a fully unconstrained random Gaussian matrix; the fact that the
error is so small, shifts its eigenvalue distribution to the right, so that
more eigenvalues are positive \citet{Bray07, Fyodorov07}.

Thus the above work indicates that for typical, generic functions chosen from a
random Gaussian ensemble of functions, local minima with high error are
exponentially rare in the dimensionality of the problem, but saddle points with
many negative and approximate plateau directions are exponentially likely at
high error.  However, are the error landscapes of practical problems of
interest somehow not reflective of generic error landscapes, and therefore not
riddled with similar saddle points?  Although our proposed algorithm described
below should be generically useful for a wide variety of problems, given that
our proximal motivation is ultimately training deep neuronal networks, we
review evidence from that literature that saddle points also play a prominent
role in the learning dynamics. 

In \citet{Baldi89} the error surface of a single hidden layer MLP with linear
units is analysed.  The number of hidden units is assumed to be less than the
number of inputs units. Such an error surface shows only saddle-points and
\emph{no} local minimum or local maximum. This result is qualitatively
consistent with the observation made by \citet{Bray07}. In fact, as long as we
do not \emph{get stuck} in the plateaus surrounding these saddle points, for
such a model we are guaranteed to obtain the global minimum of the error.
Indeed \citet{Saxe-ICLR2014}, analyzed the dynamics of learning in the presence
of these saddle points, and showed that they arise due to scaling symmetries in
the weight space of deep linear feedforward models.  These scaling symmetries
enabled \citet{Saxe-ICLR2014} to find new exact solutions to the nonlinear
dynamics of learning in deep linear networks. These learning dynamics exhibit
plateaus of high error followed by abrupt transitions to better performance,
and they qualitatively recapitulate aspects of the hierarchical development of
semantic concepts in infants \citep{Saxe-Cogsci}.

In \citet{Saad95} the dynamics of stochastic gradient descent are analyzed for
soft committee machines.  The paper explores how well a student model can learn
to imitate a teacher model which was randomly sampled.  An important
observation of this work is showing that learning goes through an initial phase
of \emph{being trapped in the symmetric subspace}. In other words, due to
symmetries in the randomly initialized weights, the network has to traverse one
or more plateaus that are caused by units with similar behaviour.
\citet{MagnusSA_98, Inoue03} provides further analysis, stating that the
initial phase of learning is plagued with saddle point structures caused by
symmetries in the weights.  Intuitively, the escape from these saddle points
corresponds to weight differentiation of the afferent synapses onto hidden
units.  Being trapped along the symmetric subspace corresponds to pairs of
hidden units computing the same function on the input distribution.  Exiting
the symmetric subspace corresponds to hidden units learning to become different
from each other, thereby specializing and learning the internal representations
of the teacher neural network.  Interestingly, the error function in the
vicinity of the symmetric subspace has a saddle point structure, and signals
that hidden unit differentiation will lower error by following directions of
negative curvature.  Thus first order gradient descent dynamics yields a
plateau in the error because it is difficult for such dynamics to rapidly
escape from the saddle point in the vicinity of the symmetric subspace by
specializing hidden units.    

\citet{Mizutani10} looks at the effect of negative curvature for learning and
implicitly at the effect of saddle point structures in the error surface. Their
findings are similar. A proof is given where the error surface of a single
layer MLP is shown to have saddle points (where the Hessian matrix is
indefinite).
 
\section{Optimization algorithms near saddle points}

The above work suggests that both in typical randomly chosen high dimensional
error surfaces, and neural network training error surfaces, a proliferation of
saddle points with error much higher than the local minimum constitute the
predominant obstruction to rapid non convex optimization.  We now provide a
theoretically justified algorithm to deal with this problem. We will focus on
nondegenerate saddle points, namely those for which the Hessian is not exactly
singular. These critical points can be locally analyzed based on a unique
reparametrization of the function as described by Morse's lemma (see chapter
7.3, Theorem 7.16 in \citet{callahan2010advanced}). 

This reparametrization is given by taking a Taylor expansion of the function
$\LL$ around the critical point. If we assume that the Hessian is not singular,
then there is a neighbourhood around this critical point where this
approximation is reliable and, since the first order derivatives vanish, the
Taylor expansion is given by:

\begin{equation}
\LL(\theta^* + \Dtheta) = \LL(\theta^*) + \frac{1}{2} (\Dtheta) ^T
\hess \Dtheta
\end{equation}

Let us denote by ${\es\el 1}, \ldots, {\es\el {n_\theta}}$ the eigenvectors of
the Hessian $\hess$ and by ${\lambda\el 1}, \ldots, {\lambda\el {n_\theta}}$
the corresponding eigenvalues.  We can now make a change of coordinates into
the space span by these eigenvectors:

\begin{equation}
\Delta \vv = \frac{1}{2}\left[
\begin{array}{c}
{\es\el{1}}^T \\
\ldots \\
{\es\el{n_\theta}}^T 
\end{array}
\right] \Dtheta
\end{equation}

\begin{equation}
\label{eq:new_system_coord}
\LL(\theta^* + \Dtheta) = \LL(\theta^*) + \frac{1}{2}\sum_{i=1}^{n_\theta}
{\lambda\el i} ({\es\el i}^T \Dtheta)^2 = \LL(\theta^*) + \sum_{i=1}^{n_\theta}
{\lambda\el i} \Delta \vv_i^2 
\end{equation}

For \emph{gradient descent} we can see that, as expected, the gradient points
in the right direction close to a saddle point. Namely, if some eigenvalue
${\lambda\el i}$ is positive, then we move towards $\theta^*$ in that direction
because the restriction of $\LL$ to the corresponding eigenvector is
$\LL(\theta^*) + \lambda\el i \Delta\vv_i^2$, which has a minimum when $\vv_i =
0$. On the other hand, if the eigenvalue is negative, then the gradient descent
will move away from $\theta^*$ which is a maximum when restricting the loss
function to the corresponding eigenvector of said eigenvalue. 

The downside of gradient descent is not the direction, but the \emph{size} of
the step along each eigenvector. The step we will take, for any direction
$\es\el i$, is given by $-2{\lambda\el i} \Delta \vv_i$.  Because the gradients
are proportional to the corresponding eigenvalues of the Hessian, the
eigenvalue dictates how fast we move in each direction. Note that also, to
avoid divergence, the learning rate has to be at most
$\slantfrac{1}{|\lambda\el{max}|}$.  Therefore, if there is a large discrepancy
between eigenvalues, then gradient descent will have to take very small steps
in some directions. This means that it might takes  a very long time to move
away form the critical point, if the critical point is a saddle point, or to
the critical point if it is a minimum. 

The \emph{Newton method} solves the slowness problem by properly rescaling the
gradients in each direction with the inverse of the corresponding eigenvalue.
The step we take now is given by $-\Delta \vv_i$.  However, this approach can
result in moving in the wrong direction. Specifically, if an eigenvalue is
negative, then by multiplying with its inverse, the Newton method would change
the sign of the gradient along this eigenvector. Instead of taking the step
away from $\theta^*$ in the direction of negative curvature (where $\theta^*$
is a maximum), Newton method moves towards $\theta^*$. This effectively makes
$\theta^*$ an \emph{attractor} for the dynamics of the Newton method, making
this algorithm converge to this unstable critical point.  Therefore, while
gradient descent might still escape saddle points in finite time, Newton method
can not and it will converge to said saddle point. 

A \emph{trust region} approach is a practical implementation of second order
methods for non-convex problems, where the Hessian is damped to remove negative
curvature.  Damping the Hessian by adding the identity matrix times some
constant $\alpha$ is equivalent to adding $\alpha$ to each of the eigenvalues
of the Hessian.  That is, we now rescale the gradients by multiplying them with
$\slantfrac{1}{{\lambda\el i} + \alpha}$, resulting in the step $-
\left(\slantfrac{{\lambda\el i}}{{\lambda\el i} + \alpha}\right) \Delta \vv_i$.
In particular, to deal with negative curvature, one has to increase the damping
coefficient $\alpha$ enough such that even for the most negative eigenvalue
$\lambda\el{min}$ we have $\lambda\el{min} + \alpha > 0$. This ensures moving
in a descent direction.  The downside is again the step size in each direction.
Adding a fixed constant to each eigenvalue makes the ratio
$\slantfrac{{\lambda\el i}}{{\lambda\el i} + \alpha}$ far from 1 for most
eigenvalues, especially when we have a large discrepancy between them. 

Beside damping, another approach of dealing with negative curvature for second
order methods is to ignore them. This can be done regardless of the
approximation of the Newton method used, for example as either a truncated
Newton method or a BFGS approximation (see \citet{NumOptBook} chapters 4 and
7).  By ignoring direction of negative curvature, we will not be able to escape
saddle points, as there is no direction in which we move away from $\theta^*$.
Damping and ignoring the directions of negative curvature are the main existing
approaches to deal with negative curvature. 

Natural gradient descent is a first order method that relies on the curvature
of the parameter manifold. That is, we take a step that induces a constant
change in the behaviour of the model as measured by the KL-divergence between
the model before taking the step and after. For example, the recently proposed
Hessian-Free Optimization \citep{Martens10} was shown to be a variant of
natural gradient descent \citep{Pascanu+Bengio-ICLR2014}.  The algorithm ends
up doing an update similar to the Newton method, just that instead of inverting
the Hessian we invert Fisher Information Matrix, $\Fbf$, which is positive
definite by construction.  It is argued in \citet{MagnusSA_98, Inoue03} that
natural gradient descent can address certain saddle point structures
effectively.  Specifically, it can resolve those saddle points arising from
having units behaving very similar.  In \citet{Mizutani10}, however, it is
argued that natural gradient descent does also suffer when negative curvature
is present. One particular known issue is the over-realizable regime, when the
model is over complete.  In this situations, while there exists a stationary
solution $\theta^*$, the Fisher matrix around $\theta^*$ is rank deficient.
Numerically, this means that the Gauss Newton direction can be (close to)
orthogonal to the gradient at some distant point from $\theta^*$
\citep{Mizutani10}. A line search in this direction would fail and lead to the
algorithm converging to some non-stationary point.  Another weakness of natural
gradient is that the \emph{residual} $\mathbf{S}$ defined as the difference
between the Hessian and the Fisher Information Matrix can be large close to
certain saddle points that exhibit strong negative curvature.  This means that
the landscape close to these critical points can be dominated by $\mathbf{S}$,
meaning that the rescaling provided by $\Fbf^{-1}$ is not optimal in all
directions as it does not match the eigenvalues of the Hessian.

The same is true for TONGA \citet{RouxMB07}, an algorithm similar to natural
gradient descent. TONGA relies on the covariance of the gradients for the
rescaling factor. As these gradients vanish close to a critical point, their
covariance will indicate that one needs to take much larger steps then needed
close to critical points. 

\section{Generalized trust region methods}

We will refer to a straight forward extension of trust region methods as 
\emph{generalized trust region methods}. The extension involves two simple 
changes of the method. The first one is that we allow to take a first order 
Taylor expansion of the function to minimize instead of always relying on a 
second order Taylor expansion as typically done in trust region methods. 

The second change is that we replace the constraint on the norm of the step
$\Dtheta$ by a constraint on the distance between $\theta$ and $\theta +
\Dtheta$. The distance measure is also not specified and can be 
specific to the instance of generalized trust region method used.  If we define
$\Ts_k(\LL, \theta, \Dtheta)$ to indicate a $k$-th order Taylor series
expansion of $\LL$ around $\theta$ evaluated at $\theta + \Dtheta$, then we can
summarize a generalized trust region as:

\begin{equation}
    \begin{aligned}
        \Delta \theta & = 
            \argmin_{\Delta \theta} 
                    \Ts_k\{\LL, \theta, \Delta\theta\} &
                    \text{ with } k \in \{1, 2\} \\
        & \text{s. t. } d(\theta, \theta + \Delta \theta) \leq \Delta & 
    \end{aligned} 
\end{equation}

\section{Addressing the saddle point problem}
\label{sec:addressing}

In order to address the saddle point problem, we will look for a solution
within the family of generalized trust region methods. We know that using the
Hessian as a rescaling factor can result in a non-descent direction because of
the negative curvature. The analysis above also suggest that correcting
negative curvature by an additive term results in a suboptimal step, therefore
we want the resulting step from this trust region method to not be a function
of the Hessian. We therefore use a first order Taylor expansion of the loss.
This means that the curvature information has to come from the constraint by
picking a suitable distance measure $d$.

\subsection{Limit the influence of second order terms -- saddle-free Newton Method}
\label{sec:sfnm}

The analysis carried out for different optimization techniques states that,
close to nondegenerate critical points, what we want to do is to rescale the
gradient in each direction $\es\el i$ by $\slantfrac{1}{|\lambda\el i|}$.  This
achieves the same optimal rescaling as the Newton method, while preserving the
sign of the gradient and therefore avoids making saddle point attractors of the
dynamics of learning. The idea of taking the absolute value of the eigenvalues
of the Hessian was suggested before. See, for example, in \citet[chapter
3.4]{NumOptBook} or in \citet[chapter 4.1]{Murray10}. However, we \emph{are not
aware} of any proper justification of this algorithm or even a proper detailed
exploration (empirical or otherwise) of this idea. 

The problem is that one can not simply replace $\hess$ by $|\hess|$, where
$|\hess|$ is the matrix obtained by taking the absolute value of each
eigenvalue of $\hess$, without proper justification. For example, one obvious
question is: are we still optimizing the same function?  While we might be
able to argue that we do the right thing close to critical points, can we do
the same far away from these critical points? In what follows we will provide
such a justification.

Let us consider the function we want to minimize $\LL$ by employing a
generalized trust region method that works on a first order approximation of
$\LL$ and enforces some constraint on the step taken based on some distance
measure $d$ between $\theta$ and $\theta + \Delta \theta$. Since the minimum of
the first order approximation of $\LL$ is at infinity, we know that within this
generalized trust region approach we will always jump to the border of the
trust region. 

So the proper question to ask is how far from $\theta$ can we trust our first
order approximation of $\LL$. One measure of this trustfulness is given by how
much the second order term of the Taylor expansion of $\LL$ influences the
value of the function at some point $\theta + \Delta \theta$. That is we want
the following constraint to hold:

\begin{equation}
\label{eq:constr}
d(\theta, \theta + \Delta \theta) = \lnorm \LL(\theta) + \nabla \LL \Dtheta + 
\frac{1}{2}\Dtheta^T \hess \Dtheta - \LL(\theta) - \nabla \LL \Dtheta \rnorm =
\frac{1}{2}\lnorm \Dtheta^T \hess \Dtheta\rnorm \leq \Delta
\end{equation}

where $\nabla \LL$ is the partial derivative of $\LL$ with respect to $\theta$
and $\Delta \in \RR$ is some some small value that indicates how much
discrepancy we are willing to accept between our first order approximation of
$\LL$ and the second order approximation of $\LL$.Note that the distance
measure $d$ takes into account the curvature of the function. 

Equation \eqref{eq:constr} is also not easy to solve for $\Dtheta$ in more than
one dimension. If we resolve the absolute value by taking the square of the
distance we get a function that is quartic in $\Dtheta$ (the term is raised to
the power 4). We address this problem by relying on the following Lemma. 

\begin{lemma}
\label{lemma:constraint}
Let $\Abf$ be a nonsingular square matrix in $\RR^{n} \times \RR^{n}$, and
$\example \in \RR^n$ be some vector. Then it holds that $|\example^T \Abf
\example | \leq \example^T |\Abf| \example$, where $|\Abf|$ is the matrix
obtained by taking the absolute value of each of the eigenvalues of $\Abf$.
\end{lemma}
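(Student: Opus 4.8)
The plan is to use the fact that the matrix of interest is the Hessian, hence symmetric, so although the statement only asks for $\Abf$ nonsingular, we are entitled to assume $\Abf$ symmetric; this is exactly what makes $|\Abf|$ well defined, since a symmetric matrix has real eigenvalues and an orthonormal eigenbasis. Writing the spectral decomposition $\Abf = \sum_{i=1}^{n} \lambda\el i \, \es\el i \es\el i^T$ with $\{\es\el i\}$ orthonormal, the matrix $|\Abf|$ is by definition $|\Abf| = \sum_{i=1}^{n} |\lambda\el i| \, \es\el i \es\el i^T$, i.e. the same eigenvectors with each eigenvalue replaced by its absolute value.

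First I would expand both quadratic forms in this eigenbasis. Setting $c_i = \es\el i^T \example$ for the coordinates of $\example$ along the eigenvectors, the decomposition gives
\begin{equation}
\example^T \Abf \example = \sum_{i=1}^{n} \lambda\el i \, c_i^2, \qquad \example^T |\Abf| \example = \sum_{i=1}^{n} |\lambda\el i| \, c_i^2 .
\end{equation}
The inequality then follows from the triangle inequality applied to the first sum, where the crucial point is that each weight $c_i^2$ is nonnegative, so absolute values pass through the sum cleanly:
\begin{equation}
\left| \example^T \Abf \example \right| = \left| \sum_{i=1}^{n} \lambda\el i \, c_i^2 \right| \leq \sum_{i=1}^{n} |\lambda\el i| \, c_i^2 = \example^T |\Abf| \example .
\end{equation}

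I expect essentially no analytic obstacle here; the content of the lemma is just the spectral theorem plus the triangle inequality. The one step that genuinely requires care is justifying the diagonalization, i.e. the symmetry of $\Abf$, which is what allows the orthonormal eigenbasis and the real-valued $\lambda\el i$ that make $|\Abf|$ meaningful. It is worth noting that nonsingularity is not actually needed for the inequality itself, since any zero eigenvalues contribute zero to both sides; its real role is to guarantee that $|\Abf|$ is again nonsingular and positive definite, so that it can later serve as a rescaling matrix whose inverse is well defined. Equality, incidentally, holds precisely when all the eigenvalues with $c_i \neq 0$ share a common sign, which is the degenerate case where $\Abf$ is already (semi)definite on the relevant subspace.
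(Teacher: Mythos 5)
Your proof is correct and is essentially the paper's own argument: expand the quadratic form in the orthonormal eigenbasis of $\Abf$ and apply the triangle inequality to $\bigl|\sum_i \lambda\el i (\es\el i^T \example)^2\bigr|$, using that the weights $(\es\el i^T \example)^2$ are nonnegative. You are in fact slightly more careful than the paper, which states the lemma for an arbitrary nonsingular matrix while its proof implicitly uses an orthonormal eigenbasis (via the resolution $\sum_i \es\el i \es\el i^T = I$); your observation that symmetry is the operative hypothesis, and that nonsingularity only matters later to make $|\hess|^{-1}$ well defined, is exactly right.
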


\begin{proof}
Let $\es\el 1, \ldots \es\el n$ be the different eigenvectors of $\Abf$ and
$\lambda\el 1, \ldots \lambda\el n$ the corresponding eigenvalues. We now
re-write the identity by expressing the vector $\example$ in terms of these
eigenvalues: 

\begin{align}
|\example^T \Abf \example|& = \left|\sum_i (\example^T \es\el i){\es\el{i}}^T \Abf \example \right| \nonumber \\
& = \left|\sum_i (\example^T \es\el i)\lambda\el i ({\es\el  i}^T \example) \right| \nonumber \\
& = \left|\sum_i \lambda\el i (\example^T \es\el i)^2 \right| \nonumber
\end{align}

We can now use the triangle inequality $|\sum_i x_i| \leq \sum_i |x_i|$ and get
that 

\begin{align}
|\example^T \Abf \example|& \leq \sum_i |(\example^T \es\el i)^2 \lambda\el i| \nonumber \\
& = \sum_i (\example^T \es\el i)|\lambda\el i| ({\es\el i}^T\example) \nonumber \\
& = \example^T |\Abf | \example \nonumber
\end{align}

\end{proof}

Instead of using the originally proposed distance measure, based on
lemma~\ref{lemma:constraint}, we will approximate the distance by its upper
bound given by $\Dtheta |\hess| \Dtheta$, resulting in the following
generalized trust region method:

\begin{equation}
    \begin{aligned}
        \Delta \theta & = 
            \argmin_{\Delta \theta} 
                    \LL(\theta) + \nabla \LL \Delta \theta \\
        & \text{s. t. } \Delta \theta^T |\hess|\Delta \theta \leq \Delta 
    \end{aligned} 
\end{equation}

Note that as discussed before, the inequality constraint can be turned into a
equality one as the first order approximation of $\LL$ has a minimum at
infinity and we always jump to the border of the trust region.  Similar to
\citet{Pascanu+Bengio-ICLR2014}, we can use Lagrange multipliers to get the
solution of this constraint optimization. This gives (up to a scalar that we
fold into the learning rate) a step of the form:

\begin{equation}
\Delta \theta = -\nabla \LL |\hess|^{-1}
\end{equation}

The algorithm is a trust region method that uses the curvature of the function
to define the shape of the trust region. It allows to move further in
directions of low curvature and enforces to move less in direction of high
curvature. If the Hessian is positive definite the method behaves identically
to the Newton method. Close to a nondegenerate critical points, it takes the
optimum step, by scaling based on the eigenvalues of the Hessian which describe
the geometry of the surface locally, while moving away from the critical point
in the direction of negative curvature. 

\section{Experimental results -- empirical evidence for the saddle point
problem}

In this section we run experiments on a scaled down version of MNIST, where
each input image is rescaled to be of size $10\times 10$. This rescaling allows
us to construct models that are small enough such that we can implement the
exact Newton method and saddle-free Newton method, without relying on any kind
of approximations. 

As a baseline we also consider minibatch stochastic gradient descent, the de
facto optimization algorithm for such models. We additionally use momentum to
improve the convergence speed of this algorithm. The hyper-parameters of
minibatch SGD -- the learning rate, minibatch size and the momentum constant --
are chosen using random search \citep{Bergstra+Bengio-2012-small}. We pick the
best configurations from approximately 80 different choices. The learning rate
and momentum constant are sampled on a logarithmic scale, while the minibatch
size is sampled from the set $\{1, 10, 100\}$.  The best performing
hyper-parameter values for SGD are provided in Table~\ref{table:hyper_sgd}.

Damped Newton method is a trust region method where we damp the Hessian $\hess$
by adding the identity times the damping factor. No approximations are used in
computing the Hessian or its inverse (beside numerical inaccuracy due to
machine precision).  For the saddle-free Newton we also damp the matrix
$|\hess|$, obtained by taking the absolute value of the eigenvalues of the
Hessian.  At each step, for both methods, we pick the best damping coefficient
among the following values: $\{10^0, 10^{-1}, 10^{-2}, 10^{-3}, 10^{-4},
10^{-5} \}$.  We do not perform an additional line search for the step size,
but rather consider a fixed learning rate of 1.  Note that by searching over
the damping coefficient we implicitly search for the optimum step size as well.
These two methods are run in batch mode.

\begin{figure}[t]
    \centering
    \begin{minipage}{0.49\textwidth}
        \centering
        \includegraphics[width=1.\columnwidth, clip=true, trim=3cm 0cm 6cm 0cm]{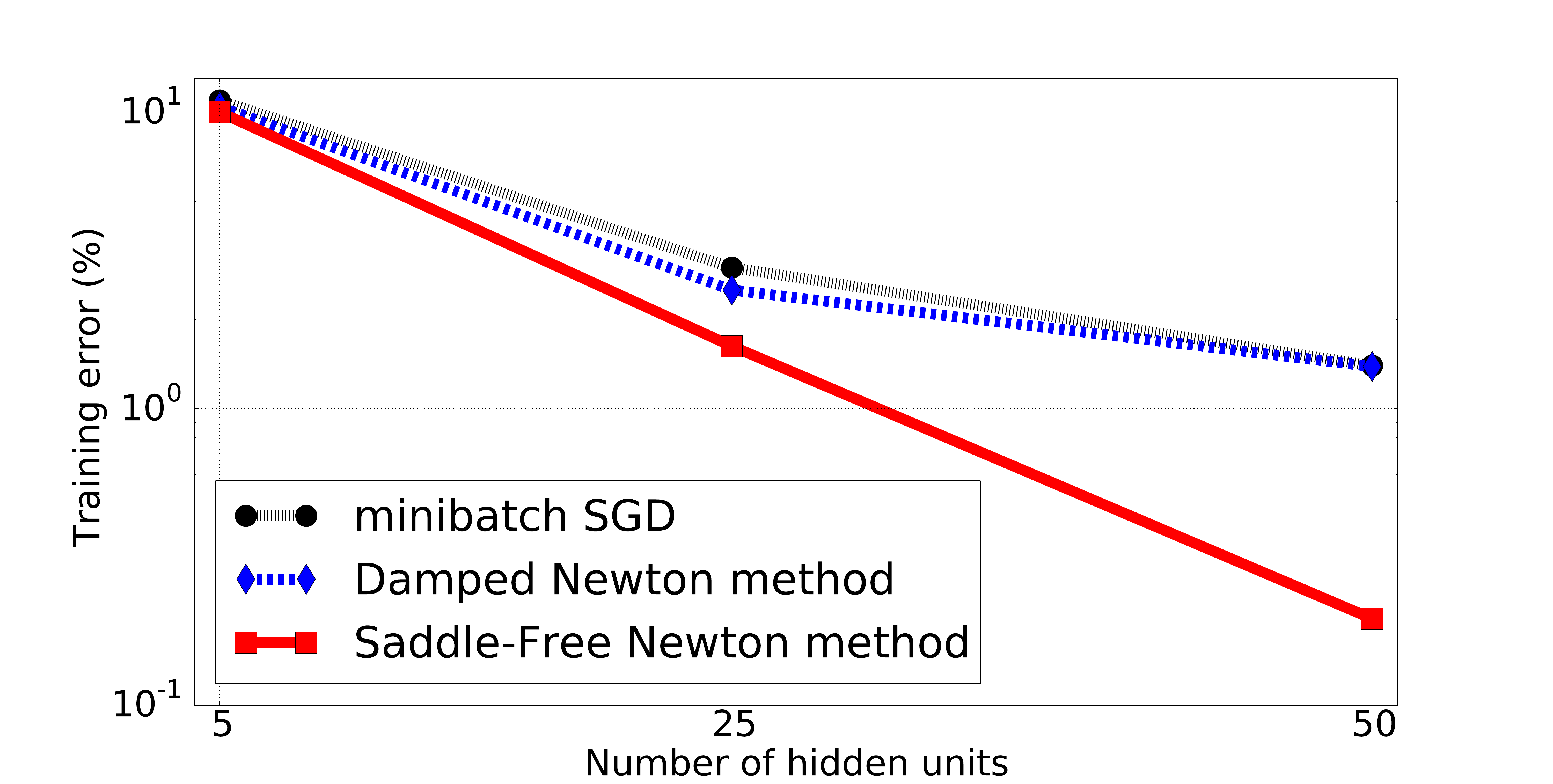}
    \end{minipage}
    \hfill
    \begin{minipage}{0.49\textwidth}
        \centering
        \includegraphics[width=1.\columnwidth, clip=true, trim=3cm 0cm 6cm 0cm]{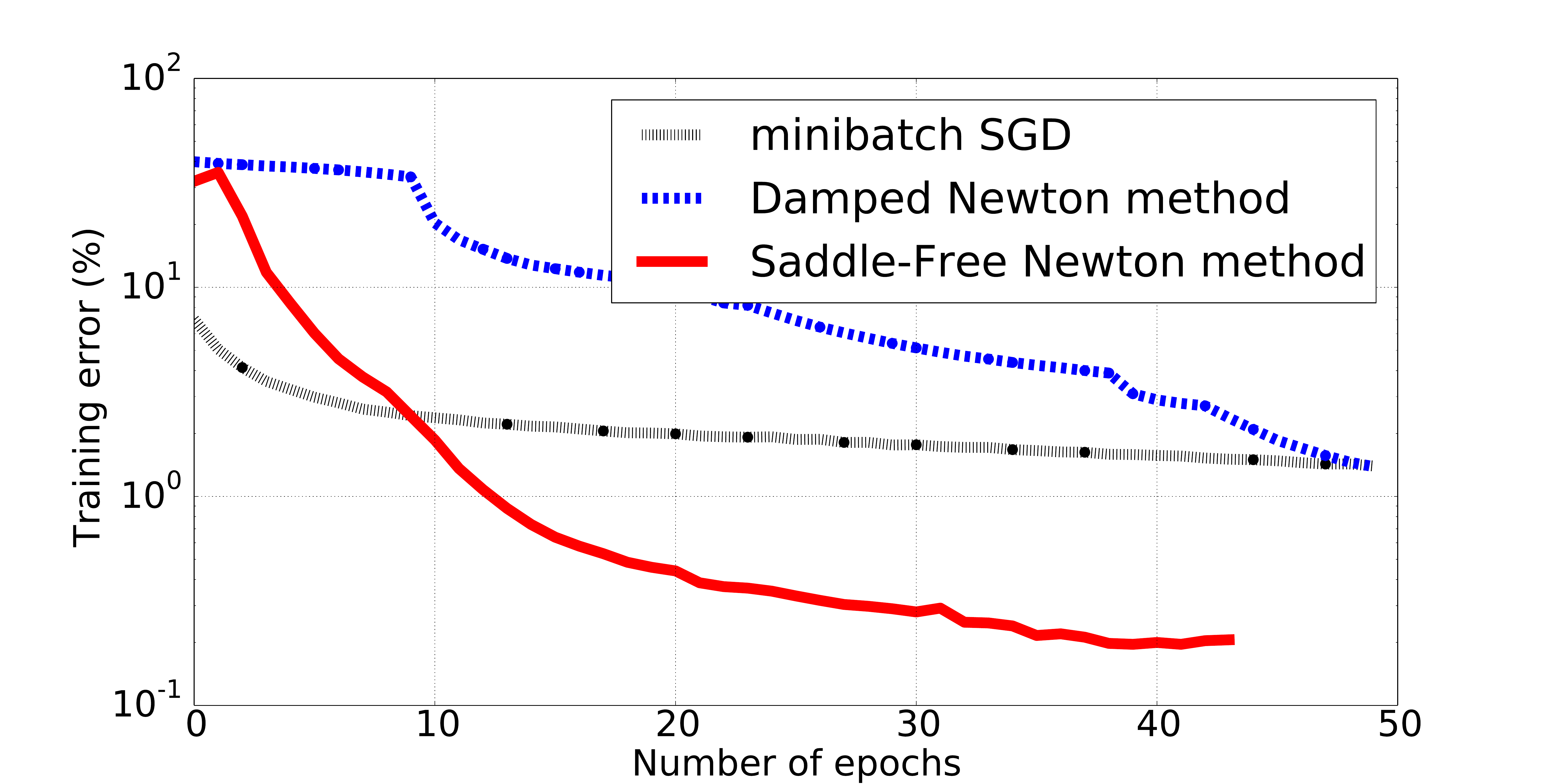}
    \end{minipage}
    \\
    \begin{minipage}{0.49\textwidth}
        \centering
        (a) 
    \end{minipage}
    \hfill
    \begin{minipage}{0.49\textwidth}
        \centering
        (b) 
    \end{minipage}
    \\
    \begin{minipage}{0.49\textwidth}
        \centering
        \includegraphics[width=1.\columnwidth, clip=true, trim=3cm 0cm 6cm 0cm]{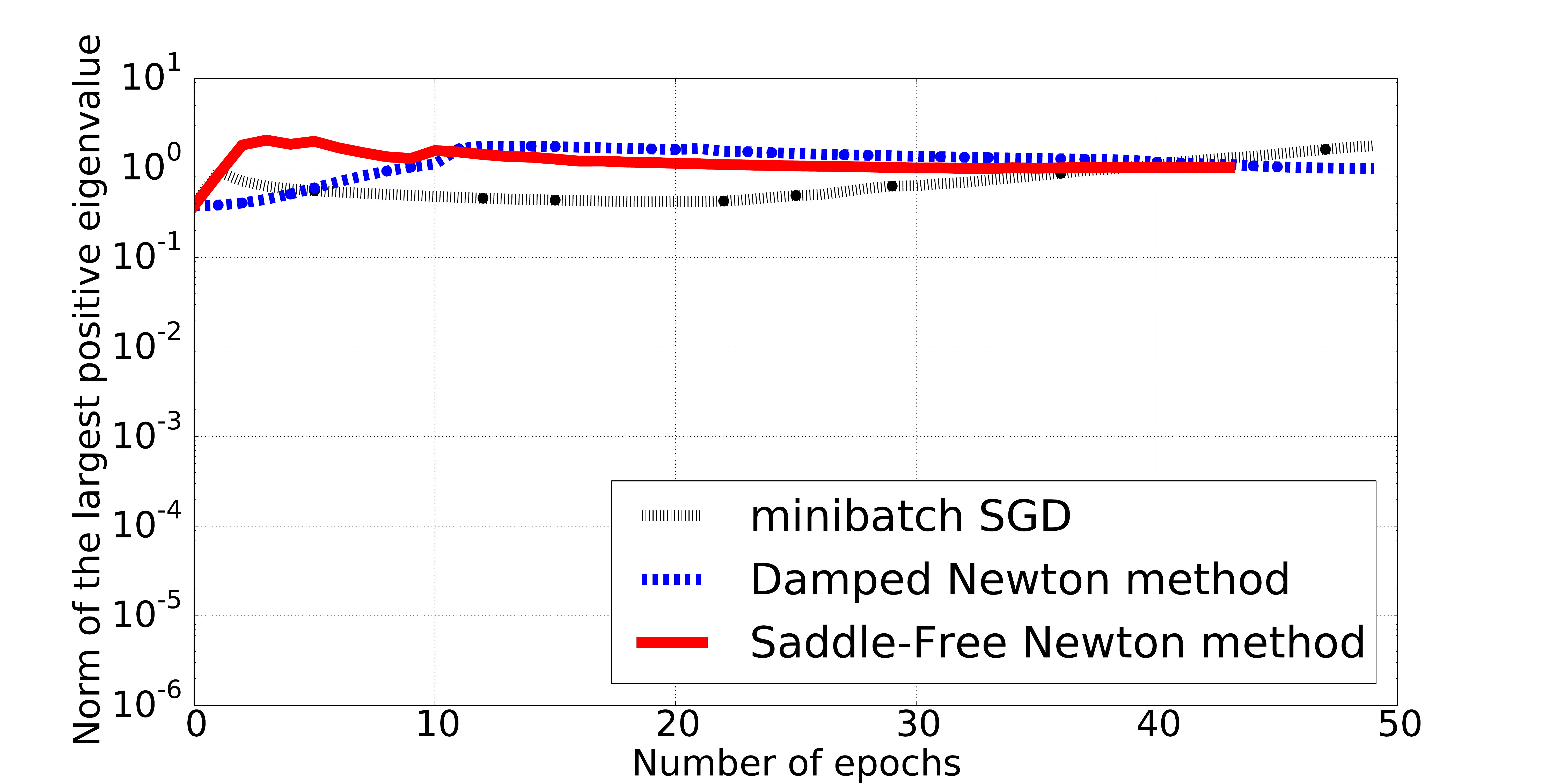}
    \end{minipage}
    \hfill
    \begin{minipage}{0.49\textwidth}
        \centering
        \includegraphics[width=1.\columnwidth, clip=true, trim=3cm 0cm 6cm 0cm]{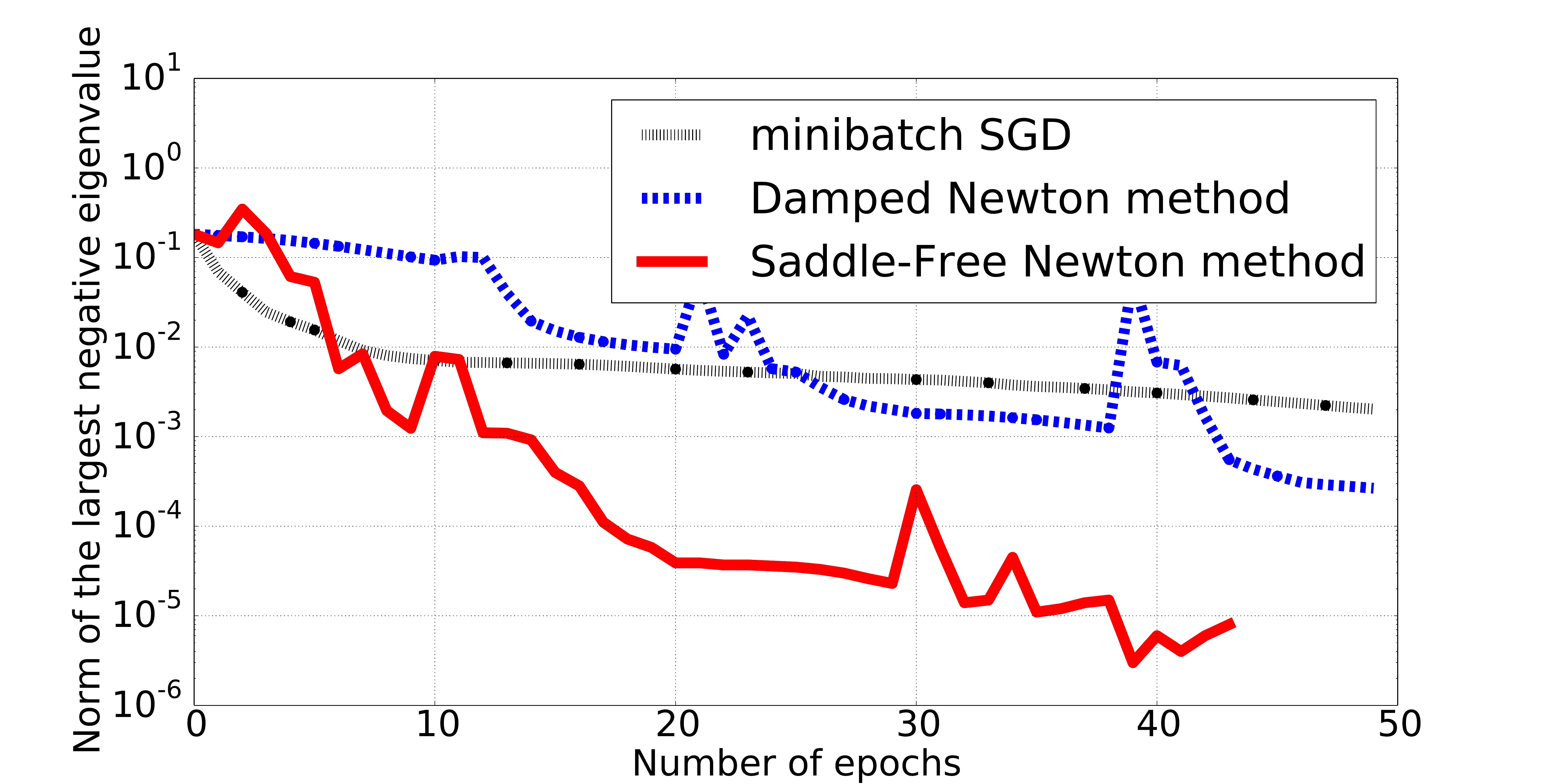}
    \end{minipage}
    \\
    \begin{minipage}{0.49\textwidth}
        \centering
        (c) 
    \end{minipage}
    \hfill
    \begin{minipage}{0.49\textwidth}
        \centering
        (d) 
    \end{minipage}
\caption{
Empirical evaluation of different optimization algorithms for a single layer
MLP trained on the rescaled MNIST dataset. In (a) we look at the minimum error
obtained by the different algorithms considered as a function of the model
size. (b) shows the optimum training curve  for the three algorithms. The error
is plotted as a function of the number of epochs. (c) looks at the evolution of
the norm of the largest positive eigenvalue of the Hessian and (d) at the norm
of the largest negative eigenvalue of the Hessian. 
}
\label{fig:results}
\end{figure}

\begin{table}[ht]
    \centering
    \begin{tabular}{c | c c c}
        \hline
         Model size & learning rate & momentum constant & minibatch size \\
        \hline
        \hline
        5 units & 0.074 & 0.031 & 10 \\
        25 units & 0.040 & 0.017 & 10 \\
        50 units & 0.015 & 0.254 & 1 \\ 
        \hline
    \end{tabular}
    \caption{Best performing hyper-parameters for stochastic gradient descent.}
    \label{table:hyper_sgd}
\end{table}

The results of these experiments are visualized in Figure~\ref{fig:results}.
Figure~\ref{fig:results}~(a) looks at the minimum error reached by different
algorithms as a function of the model size. The plot provides empirical
evidence that, as the dimensionality of the problem increases, the number of
saddle points also increases (exponentially so). We argue that for the larger
model (50 hidden units), the likelihood of an algorithms such as SGD or Newton
method to stop near  a saddle point becomes higher (as the number of saddle
points is much larger) and therefore we should see these algorithms perform
worse in this regime. The plot confirms our intuition. We see that for the 50
hidden units case, saddle-free outperforms the other two methods considerably. 

Figure~\ref{fig:results}~(b) depicts the training error versus the number of
epochs that the model already went through. This plot suggest that saddle-free
behaves well not only near a critical point but also far from them, taking a
reasonable large steps. 

In Figure~\ref{fig:results}~(c) we look at the norm of the largest positive
eigenvalue of the Hessian as a function of the number of training epochs for
different optimization algorithms. Figure~\ref{fig:results}~(d) looks in a
similar fashion at the largest negative eigenvalues of the Hessian. Both these
quantities are approximated using the Power method. The plot clearly shows that
initially there is a direction of negative curvature (and therefore we are
bound to go near saddle points). The norm of the largest negative eigenvalue is
close to that of the largest positive eigenvalue.  As learning progresses, the
norm of the negative eigenvalue decreases as it is predicted by the theory of 
random matrices \cite{Bray07} (think of the semi-circular distribution being shifted 
to the right).  For stochastic gradient descent and Damped Newton method, however,
even at convergence we still have a reasonably  large negative eigenvalue,
suggesting that we have actually ``converged'' to a saddle point rather than a
local minimum. For saddle-free Newton method the value of the most negative
eigenvalue decreases considerably, suggesting that we are more likely to have
converged to an actual local minimum.

\begin{figure} 
\centering
    \includegraphics[width=.49\textwidth,clip=true, trim=4cm 0cm 4cm 0cm]{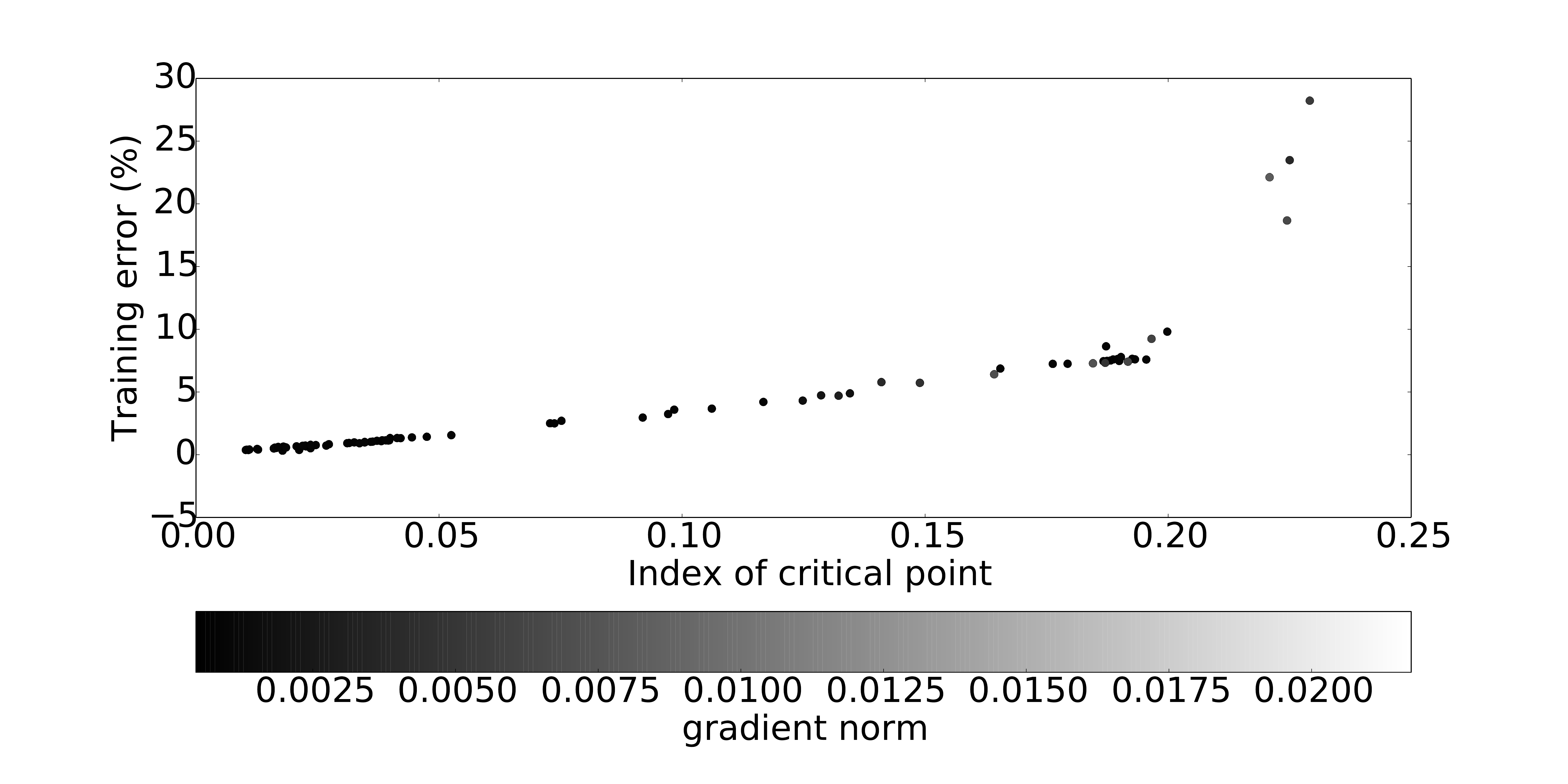}
    \includegraphics[width=.49\textwidth,clip=true, trim=3cm 0cm 4cm 0cm]{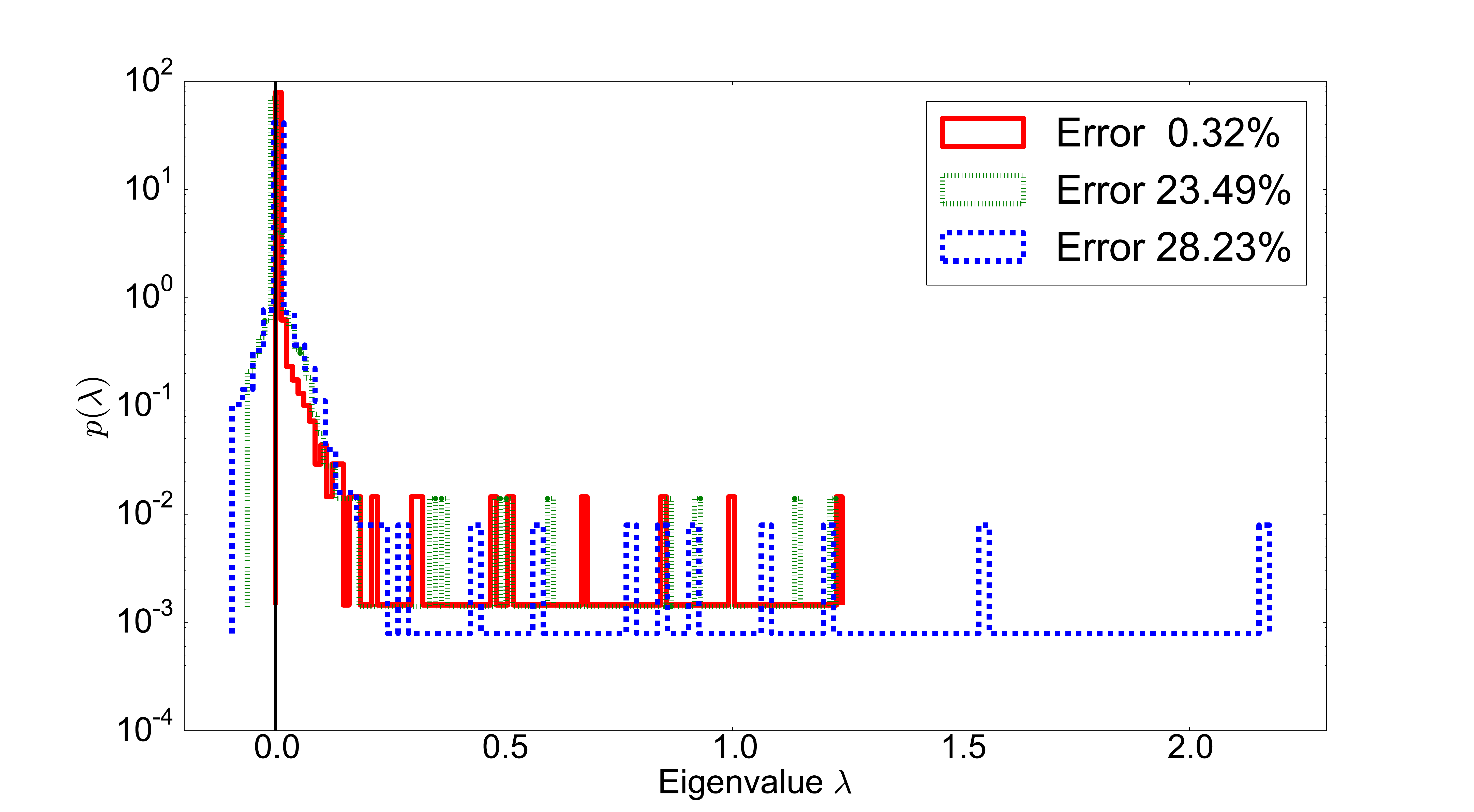}
    \caption[Training error versus the index and distribution of eigenvalues for different critical points]{
        The plot on the left depict the training error versus the index
        (fraction of negative eigenvalues) for different critical points found
        nearby the path taken by different runs of the saddle-free Newton
        method. The critical points are discovered using the Newton method.
        Note that the gray level of each point is given by the norm of the
        gradient where the Hessian was measured.  It can be regarded as a
        measure of noise (how far from the actual critical point we have
        actually converged). The plot on the right shows the distribution of
        eigenvalues of the Hessian for three different critical points selected
        based on their error. Note that the y-axis is on a log scale. 
    } 
    \label{fig:index} 
\end{figure}

Figure~\ref{fig:index} is an empirical evaluation of whether the properties
predicted by \citet{Bray07} for random Gaussian error functions hold for neural
networks. 

To obtain this plot we used the Newton method to discover nearby critical
points along the path taken by the saddle-free Newton algorithm.  We consider
20 different runs of the saddle-free algorithm, each using a different random
seed. We then run 200 jobs. The first 100 jobs are looking for critical points
near the value of the parameters obtained after some random number of epochs
(between 0 and 20) of a randomly selected run (among the 20 different runs) of
saddle-free Newton method.  To this starting position uniform noise is added of
small amplitude (the amplitude is randomly picked between the different values
$\{10^{-1}, 10^{-2}, 10^{-3}, 10^{-4}\}$ The last 100 jobs look for critical points 
near  uniformally sampled
weights (the range of the weights is given by the unit cube).  
The task (dataset and model) is the same as the one used previously.

In Figure~\ref{fig:index}, the plot on the left shows the index of the critical
point (fraction of negative eigenvalues of the Hessian at the critical point)
versus the training error that it obtains.  This plot shows that all critical
points, projected in this plane, align in a monotonically decreasing curve, as
predicted by the theoretical results on random Gaussian error
functions\citep{Bray07}. This provides evidence that most critical points with 
corresponding to large error has to be, with high probability, a saddle point, 
and not a local minima. 

The plot on the right looks at the distribution of the eigenvalues of the
Hessian at 3 different critical points picked according to the error that they
realise. Note that the plot is on a log scale on the y-axis. These
distributions \emph{do not follow} the semi-circular rule, as predicted by the
theory of random matrices. This is probably caused by the structure of the
neural network (and of the task). However, the generic observation of
\citep{Bray07}, that as the error decreases the distribution shifts to the
right seems to hold, with the exception of the peak that we have around $0$.
The fact that we have a large number of eigenvalues at $0$, and a few eigenvalues
that are sufficiently large suggests that any of these saddle-points are
surrounded by plateaus, in which the different algorithms might end up taking a
suboptimal step.

\section{Conclusion}

In the introduction of this work we provided a thorough literature review of
works that argue for the prevalence of saddle points in large scale
non-convex problems or how learning addresses negative curvature.  We tried to
expand this collection of results by providing an intuitive analysis of how
different existing optimization techniques behave near such structures. Our
analysis clearly shows that while some algorithms might not be ``stuck'' in the
plateau surrounding the saddle point they do take a suboptimal step. 

The analysis also suggests what would be the optimal step. We extend this
observation, that was done prior to this work, to a proper optimization
algorithm by relying on the framework of generalized trust region methods.
Within this framework, at each step, we optimize a first order Taylor
approximation of our function, constraint to a region within which this
approximation is reliable. The size of this region (in each direction) is
determined by how different the first order approximation of the function is
from the second order approximation of the function. 

From this we derive an algorithm that we call saddle-free Newton method, that
looks similarly to the Newton method, just that the matrix that we need to
invert is obtained from the Hessian matrix by taking the absolute value of all
eigenvalues. We show empirically that our claims hold on a small model trained 
on a scaled-down version of MNIST, where images are scaled 
to be $10 \times 10$ pixels.

As future work we are interested in mainly two directions. The first direction 
is to provide a pipeline for saddle-free Newton method that allows to scale 
the algorithm to high dimensional problems, where we can not afford to compute 
the entire Hessian matrix. The second direction is to further extend the 
theoretical analysis of critical points by specifically looking at neural 
network models. 

\subsubsection*{Acknowledgments}
The authors would like to acknowledge the support of the following agencies for
research funding and computing support: NSERC, Calcul Qu\'{e}bec, Compute Canada,
the Canada Research Chairs and CIFAR. We
would also like to thank the developers of
Theano~\citep{bergstra+al:2010-scipy,Bastien-Theano-2012}.
Razvan Pascanu is supported by a Google DeepMind Fellowship.

\bibliography{myrefs}
\bibliographystyle{natbib}
\end{document}